\documentclass{article}
\usepackage{arxiv}

\usepackage[utf8]{inputenc} 
\usepackage[T1]{fontenc}    
\usepackage{hyperref}       
\usepackage{url}            
\usepackage{booktabs}       
\usepackage{amsfonts}       
\usepackage{nicefrac}       
\usepackage{microtype}      
\usepackage{lipsum}		
\usepackage{graphicx}
\usepackage{doi}

\usepackage{xspace}
\usepackage{adjustbox}
\usepackage[normalem]{ulem}
\usepackage{tablefootnote}
\usepackage{enumitem}
\usepackage{multirow}
\usepackage{threeparttable}
\usepackage{subcaption}
\usepackage{pifont}
\usepackage{amsthm, amssymb}
\usepackage{mathtools}
\usepackage{colortbl}
\usepackage{url}
\usepackage{booktabs}

\newtheorem{theorem}{Theorem}
\newtheorem{lemma}{Lemma}
\newtheorem{corollary}{Corollary}
\newtheorem{definition}{Definition}

\usepackage{algorithm, algorithmic}


%
%
%
%
\newcommand{\half}{\frac{1}{2}}

\newcommand{\T}{\mathrm{T}}

\definecolor{OliveGreen}{rgb}{0,0.6,0}
\newcommand{\reminder}[1]{{\textsf{\textcolor{blue}{[#1]}}}}

\newcommand{\hide}[1]{}
\newcommand{\mkclean}{\renewcommand{\reminder}{\hide}}
\mkclean

\DeclareMathAlphabet\mathbfcal{OMS}{cmsy}{b}{n}
\newcommand{\MG}{\mathcal{G}}
\newcommand{\MV}{\mathcal{V}}
\newcommand{\ME}{\mathcal{E}}
\newcommand{\SN}{\mathcal{S}}
\newcommand{\KF}{\mathcal{K}}

\newcommand{\BD}{\mathbf{D}}
\newcommand{\BA}{\mathbf{A}}

\newcommand{\BI}{\mathbf{I}}
\newcommand{\BP}{\mathbf{P}}
\newcommand{\BQ}{\mathbf{Q}}
\newcommand{\BR}{\mathbf{R}}
\newcommand{\BE}{\mathbf{E}}
\newcommand{\BX}{\mathbf{X}}
\newcommand{\BY}{\mathbf{Y}}
\newcommand{\BW}{\mathbf{W}}
\newcommand{\BM}{\mathbf{M}}

\newcommand{\NA}{\mathbf{\mathcal{A}}}

\newcommand{\restorefunc}{f}

\title{Learning node embeddings via summary graphs: a brief theoretical analysis}


\author{Houquan Zhou \\
	Data Intelligence Research Center \\
	Institute of Computing Technology \\
	Beijing, 100000 \\
	\texttt{zhouhouquan18z@ict.ac.cn} \\
	\And
	Shenghua liu \\
	Data Intelligence Research Center \\
	Institute of Computing Technology \\
	Beijing, 100000\\
	\texttt{liushenghua@ict.ac.cn} \\
	\And
	Danai Koutra \\
	Computer Science \& Engineering \\
	University of Michigan  \\
	Ann Arbor, MI 48109 USA \\
	\texttt{dkoutra@umich.edu} \\
	\And
	Huawei Shen \\
	Data Intelligence Research Center \\
	Institute of Computing Technology \\
	Beijing, 100000 \\
	\texttt{shenhuawei@ict.ac.cn} \\
	\And
	Xueqi Cheng \\
	Data Intelligence Research Center \\
	Institute of Computing Technology \\
	Beijing, 100000 \\
	\texttt{cxq@ict.ac.cn} \\
}



\hypersetup{
pdftitle={Learning node embeddings via summary graphs: a brief theoretical analysis},
pdfsubject={},
pdfauthor={Houquan Zhou},
}

\begin{document}
\maketitle

\begin{abstract}
	Graph representation learning plays an important role in many graph mining applications, but learning embeddings of large-scale graphs remains a problem.
	Recent works try to improve scalability via graph summarization---i.e., they learn embeddings on a smaller summary graph, and then restore the node embeddings of the original graph. However, all existing works depend on heuristic designs and lack theoretical analysis.

	Different from existing works, we contribute an in-depth theoretical analysis of three specific embedding learning methods based on introduced kernel matrix, and reveal that learning embeddings via graph summarization is actually learning embeddings on a approximate graph constructed by the configuration model.
	We also give analysis about approximation error.
	To the best of our knowledge, this is the first work to give theoretical analysis of this approach.
	Furthermore, our analysis framework gives interpretation of some existing methods and provides great insights for future work on this problem.

\end{abstract}

\keywords{Graph summarization \and Graph embedding \and Representation learning}

\section{Introduction}
Graph representation learning has gained much research interest in recent years due to its success in various fields including biology, computer vision, text classification, and more.
However, many representation learning methods are hard to scale to large graphs.

To overcome this problem, some researchers employ matrix factorization technique~\cite{ou2016asymmetric,qiu2018network,qiu2019netsmf} or sampling methods~\cite{hamilton2017inductive,chen2018fastgcn,huang2018adaptive,ying2018webscale,ladies2019,chiang2019clustergcn,zeng2020graphsaint}.
Other works first summarize the input graph into a smaller summary graph by grouping subsets of nodes into supernodes and linking them via superedges~\cite{liu2018graph}; these approaches, thus, reduce the number of nodes.
Then, they employ embedding learning methods on the summary graph and restore the node embeddings of the original graph.
Representative works include HARP~\cite{chen2018harp}, MILE~\cite{liang_mile_2020}, and GraphZoom~\cite{deng2020graphzoom}.

A key limitation of the existing summarization-based solutions for scaling up graph representation learning is that they leverage heuristic summarization methods, 
and then \textbf{empirically} restore the embeddings of the original nodes from the supernode embeddings.
Thus, there are no theoretical studies of the underlying mechanisms.


In this work, we reveal the theoretical mechanism of learning embedding via summary graphs.
We theoretically show that applying three methods (DeepWalk, LINE, and GCN) on summary graphs is equivalent to applying them on a reconstructed graph based on the configuration model.
The main theoretical results is summarized in Table~\ref{tab:summary}.
To the best of our knowledge, this paper is the first to analyze the problem theoretically.

The rest of the paper is organized as follows: In Section~\ref{sec:related_work}, we introduce some existing works of embedding learning by summarization. Section~\ref{sec:preliminary} presents some basic concepts about graph summarization and node embeding learning methods. The main theoretical analysis is give in Section~\ref{sec:analysis}, including the definition of kernel matrix, the connection of summary graphs and original graphs, and the approximation error. Section~\ref{sec:conclusion} concludes the paper.

\definecolor{Gray}{gray}{0.85}
\begin{table}[!t]
\centering
\renewcommand{\arraystretch}{1.15}
\begin{threeparttable}
    \caption{
    The closed-form solutions for learning original graph embeddings from smaller, summary graphs, in DeepWalk, LINE, and GCN, using restoration matrix $\BR$. $\BE$ is embeddings on original graph $\MG$, and $\BE_s$ is embeddings on summary graph $\MG_s$.
    }\label{tab:summary}
    \begin{tabular}{l>{\centering}p{0.28\columnwidth}>{\centering}p{0.2\columnwidth}cc}
        \toprule
        \multirow{2}{*}{\textbf{Method}} & \textbf{Kernel matrix} & \textbf{Restoration}  & \textbf{Embeddings} \\
         & $\KF(\MG)$ & {$\BR(i,p), v_i \in \SN_p$\tnote{1}} & $\BE$ \\
        \midrule
        DeepWalk & {$(\BD^{-1} \BA)^{\tau} \BD^{-1} $} & 1 & \multirow{4}{*}{$\BR \BE_s$} \\
        LINE & {$\BD^{-1} \BA \BD^{-1} $} & 1 &   \\
        GCN & {$\tilde{\BD}^{-\frac{1}{2}} \tilde{\BA} \tilde{\BD}^{-\frac{1}{2}}$} & $\sqrt{\frac{d_i}{d_p^{(s)}}}$ &  \\
        \rowcolor{Gray} General form & {$\left(\BD^{-c} \BA \BD^{-1+c}\right)^{\tau}$ $\BD^{1-2c}$} & $\left(\frac{d_i}{d_{p}^{(s)}}\right)^{1-c}$ & \\
        \bottomrule
    \end{tabular}
    \begin{tablenotes}
        \item[1]{\footnotesize $v_i$: Node $i$ in original graphs, $\SN_p$: Supernode $p$ in summary graphs.}
    \end{tablenotes}
\end{threeparttable}
\end{table}
\section{Related Work}\label{sec:related_work}
\textbf{Graph representation learning}. Graph representation learning aims to map each node in graphs into a low-dimensional vector (called embedding or representation) which captures the structural information.
The learned latent embeddings can then be fed into machine learning and data mining algorithms for various downstream tasks, such as node classification and link prediction.
A few representative examples from the rich literature in graph representation learning include:
DeepWalk~\cite{perozzi2014deepwalk}, node2vec~\cite{grover2016node2vec}, LINE~\cite{tang2015line}, and graph neural network (GNN) methods~\cite{zhou2019graph,wu2021comprehensive}, such as GCN~\cite{kipf2017semi}, GraphSAGE~\cite{hamilton2017inductive} and GAT~\cite{velickovic2018graph}, which adopt a message-passing framework and update the node embeddings based on their neighbors' representations recursively.

Although graph representation learning methods are successful, their lack of scalability and efficiency is an important problem. To tackle this problem, some works employ sampling techniques, including \emph{layer sampling}~\cite{chen2018stochastic,chen2018fastgcn,huang2018adaptive} and \emph{subgraph sampling}~\cite{ying2018webscale,chiang2019clustergcn,zeng2020graphsaint}.



\vspace{0.2cm}
\noindent \textbf{Embedding learning by summarization}. Another way to improve the scalability is via \emph{graph summarization}~\cite{yan2019groupinn,liu2018graph}. The typical approach is to coarsen the original graph into a smaller summary graph, and apply representation learning methods on it to obtain intermediate embeddings. The embeddings of the original nodes are then restored with a further refinement step.
For example, HARP~\cite{chen2018harp} finds a series of smaller graphs which preserve the global structure of the input graph, and learns representations hierarchically.
HSRL~\cite{fu2019learning} learns embeddings on multi-level summary graphs, and concatenate them to restore original embeddings.
MILE~\cite{liang_mile_2020} repeatedly coarsens the input graph into smaller ones using a hybrid matching strategy, and finally refines the embeddings via GCN to obtain the original node embeddings.
GPA~\cite{lin2020initialization} uses METIS~\cite{karypis1998metis} to partition the graphs, and smooths the restored embeddings via a propagation process.
GraphZoom~\cite{deng2020graphzoom} employs an extra graph fusion step to combine the structural information and feature information, and then uses a spectral coarsening method to merge nodes based on their spectral similarities. Embeddings are then refined by a graph filter to ensure feature smoothness.
\cite{fahrbach2020fast} learns embeddings of the given subset of nodes by coarsening the remaining nodes, which is not capable to learn embeddings of the remaining ones.

However, all these methods can only apply to unsupervised methods.
Moreover, they are based on heuristic designs and lack theoretical formulation and analysis.
This paper aims to fill this gap and give some theoretical analysis of this problem.



\renewcommand{\arraystretch}{1}
\begin{table}[thb]
   \small
    \centering
    \caption{Major Symbols and Definitions.}\label{tab:symbols}
    \begin{tabular}{ll}
        \toprule
        \textbf{Symbol} & \textbf{Definition} \\
        \midrule
        $\MG$=$(\MV, \ME)$ & Original graph with nodeset $\MV$ and edgeset $\ME$ \\
        $\MG_s$=$(\MV_s, \ME_s)$ & Summary graph with supernodes $\MV_s$ and superedges $\ME_s$ \\
        $\MG_r$=$(\MV, \ME_r)$ & Reconstructed graph with nodeset $\MV$ and edgeset $\ME_r$ \\
        $v_i$ & Node $i$ in the original graph $\MG$ \\
        $\SN_p$ & Supernode $p$ in the summary graph $\MG_s$ \\
        $d_i, d_p^{(s)}$ & Degree of node $i$ and supernode $p$ \\ \midrule
        $\BA, \BA_s, \BA_r$ & Adjacency matrix of original, summary, reconstructed graph \\
        $\BD, \BD_s$ & Degree matrix of original and summary graph \\
        $\BP, \BQ$ & Membership and reconstruction matrix in summarization \\
        $\BR$ & Restoration matrix for recovering the original embeddings \\
        $\BE, \BE_{s}$ & Embeddings of original graph and summary graph \\
        \bottomrule
    \end{tabular}
\end{table}
\section{Preliminary}\label{sec:preliminary}
In this section, we first introduce some basic concepts of graph embedding learning and graph summarization.
Table~\ref{tab:symbols} gives the most frequently used symbols in the paper.

\subsection{Graph Embedding}
\label{sec:pre-embedding}
In this paper, we theoretically analyze three graph embedding methods, DeepWalk, LINE, and GCN. 

\subsubsection{DeepWalk and LINE}
DeepWalk~\cite{perozzi2014deepwalk} is an unsupervised graph representation learning method inspired by the success of word2vec in text embedding.
It generates random walk sequences and treats them as sentences that are later fed into a skip-gram model with negative sampling to learn latent node representations.

It has been proved in~\cite{qiu2018network} that DeepWalk is implicitly approximating and factorizing the following matrix:
\begin{equation}\label{equ:deepwalk}
    \BM \coloneqq \log \left( \mathrm{vol}(\MG) \left( \frac{1}{T} \sum_{\tau=1}^{T} (\BD^{-1} \BA)^{\tau}\right) \BD^{-1}\right) - \log b,
\end{equation}
where $T$ and $b$ are the context window size and the number of negative samples in DeepWalk, respectively.

LINE~\cite{tang2015line} learns embeddings by optimizing a carefully designed objective function that aims to preserve both the first-order and second-order proximity.
Though LINE and DeepWalk appear to be different, it has been shown in~\cite{qiu2018network} that LINE is also equivalent to factorizing a similar matrix to Eq.~\eqref{equ:deepwalk} and is a special case of DeepWalk for $T=1$:
\begin{equation}\label{equ:LINE}
    \BM \coloneqq \log \left( \mathrm{vol}(\MG) \BD^{-1} \BA \BD^{-1} \right) - \log b.
\end{equation}

\subsubsection{GCN}
GCN~\cite{kipf2017semi} is a graph neural network model transferring traditional convolution neural network to non-Euclidean graph data. In each layer of GCN, node features are propagated based on a first-order approximation of spectral convolutions on graphs:
\begin{equation}\label{equ:gcn_conv}
    \BE^{(k+1)} = \sigma(\tilde{\BD}^{-\frac{1}{2}} \tilde{\BA} \tilde{\BD}^{-\frac{1}{2}} \BE^{(k)} \BW^{(k)}) \, ,
\end{equation}
where $\BE^{(k)}$ are the node embeddings at the $k$-th layer,  $\BE^{(0)} = \BX$ is the input node feature matrix,
$\BW^{(k)}$ is a learnable weight matrix at the $k$-th layer,
$\tilde{\BA}=\BA+\BI$ is the augmented adjacency matrix with self-loops ($\BI$ is the identity matrix), and
$\tilde{\BD}$ is the corresponding augmented degree matrix.
Finally, $\sigma(\cdot)$ is the non-linear ReLU operation as an activation function, i.e., $\sigma(x) = \max(0, x)$.

\subsection{Graph Summarization}\label{sec:pre_gs}
Given an input graph $\MG=(\MV, \ME)$ with $n=|\MV|$ nodes, graph summarization aims to find a smaller summary graph $\MG_s=(\MV_s, \ME_s)$ (with $n_s = |\MV_s|$ nodes) that preserves the structural information of the original graph.
The supernode set $\MV_s$ forms a partition of the original node set $\MV$ such that every node $v \in \MV$ belongs to exactly one supernode $\SN\in\MV_s$.
The supernodes are connected via superedges $\ME_s$, which are weighted by the sum of original edges between the constituent nodes.
That is, superedge $\BA_s(p, q)$ between supernodes $\SN_p$, $\SN_q$ is defined as:
\begin{equation*}
    \BA_s(p, q) = \sum_{v_i\in \SN_p} \sum_{v_j\in \SN_q} \BA(i, j).
\end{equation*}

The adjacency matrix of the summary graph can be formulated using a \emph{membership matrix} $\BP\in \mathbb{R}^{n_s\times n}$ as $\BA_s = \BP \BA \BP^{\T}$, where
\begin{equation*}
    \BP(p, i) = \begin{cases}
        1 & \quad \text{if } v_i \in \SN_p \\
        0 & \quad \text{otherwise}.
    \end{cases}
\end{equation*}
Given the summmary graph $\MG_s$, the original graph $\MG$ can be approximated with the reconstructed graphs $\MG_r$ with adjacency matrix $A_r$ defined as:
\begin{equation}
\label{eq:pre-reconstruction}
    \BA_r = \BQ \BA_s \BQ^\T \, ,
\end{equation}
where $\BQ\in \mathbb{R}^{n\times n_s}$ is the \emph{reconstruction matrix}. Note that $\BA_r$ can be seen as a \textbf{low-rank approximation} of the original $\BA$.

Specifically, in this work, we consider the configuration-based reconstruction scheme~\cite{zhou2021dpgs}, which adopts \emph{the configuration-based model} as null model. In that case, reconstructed edge weights are proportional to degrees of endpoints. $\BQ$ and $\BA_r$ are defined as:
\begin{align}
   \BQ(i, p) &= \begin{cases}
               \frac{d_i}{d_p^{(s)}} & \quad \text{if } v_i \in \SN_p \\
               0               & \quad \text{otherwise}
           \end{cases} \label{equ:cr_q} \\
   \BA_r(i, j) &= \frac{d_i}{d_p^{(s)}} \BA_s(p, q) \frac{d_j}{d_q^{(s)}}\quad v_i\in \SN_p, v_j\in \SN_q \label{equ:cr_ar}
\end{align}
We will later show that, this reconstruction scheme plays an important role in our theoretical analysis.

\section{Theoretical analysis}\label{sec:analysis}
In this section, we theoretically reveal the mechanism behind the approach of learning embeddings on summary graphs.
In short, we show that running three embedding methods (DeepWalk, LINE and GCN) on a summary graph is equivalent to running them on a \textbf{approximate configuration-based reconstructed graph}.

\subsection{Approximating kernel matrices}\label{sec:method_kernel}
We begin our analysis with the following \textbf{kernel matrix}.
By comparing the function forms of DeepWalk, LINE and GCN, we observe that a common kernel matrix can be summarized as:
\begin{definition}[Kernel Matrix]\label{def:kernel_matrix} DeepWalk, LINE, and GCN are based on the following generalized kernel matrix:
\begin{equation}\label{equ:kernel_matrix}
 \KF_\tau(\MG) \coloneqq \left( \BD^{-c} \BA \BD^{-1+c}\right)^{\tau} \BD^{1-2c},
\end{equation}
where $0\le c \le 1$ and $\tau$ is a positive integer, and $\BA$ and $\BD$ are adjacency matrix and degree matrix of $\MG$ respectively. We omit the subscript $\tau$ if there is no ambiguity. For $c=1$, we obtain the matrix that appears in DeepWalk and LINE, and $c=\frac{1}{2}$ yields the matrix within the GCN formulation.
\end{definition}

As we show next in Lemma~\ref{thm:approx}, under \textbf{the configuration-based reconstruction scheme} (see Eq.~\eqref{equ:cr_q} and~\eqref{equ:cr_ar}), this kernel matrix on the original graph, $\KF(\MG)$, can be approximated with the same kernel matrix on the summary graph, $\KF(\MG_s)$, in a closed form.
\begin{theorem}\label{thm:approx}
    Given $\BA_r$ (reconstructed by the configuration-based scheme, see Eq.~\eqref{equ:cr_ar}) as a low-rank approximation of the original adjacency matrix $\BA$, the kernel matrix of $\MG$ can be approximated by the one on $\MG_s$ as follow:
    \begin{equation}\label{eq:kernel-approx}
        \begin{aligned}
        \KF(\MG)  &\approx \left( \BD^{-c} \BA_r \BD^{-1+c}\right)^{\tau} \BD^{1-2c} \\
            &= \BR \left(\BD_s^{-c} \BA_s \BD_s^{-1+c} \right)^{\tau} \BD_s^{1-2c} \BR^\T \\
            &= \BR~~\KF(\MG_s)~~\BR^\T,
        \end{aligned}
    \end{equation}
    where $\BR \in \mathbb{R}^{n\times n_s}$ is the restoration matrix:
    \begin{equation*}
        \BR(i, p) = \begin{cases}
            \left(\frac{d_i}{d_{p}^{(s)}}\right)^{1-c} &\quad \text{if } v_i \in \SN_p \\
            0 &\quad \text{otherwise},
        \end{cases}
    \end{equation*}
    which is closely related to the configuration-based reconstruction matrix $\BQ$ given in \eqref{equ:cr_q}.
\end{theorem}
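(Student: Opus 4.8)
The plan is to separate the displayed chain into its two genuinely different assertions. The first line, $\KF(\MG)\approx(\BD^{-c}\BA_r\BD^{-1+c})^{\tau}\BD^{1-2c}$, is nothing more than the substitution of the low-rank reconstruction $\BA_r$ for $\BA$, so I would defer its quantitative justification to the approximation-error analysis and only record here the structural fact that makes the substitution meaningful: the configuration-based reconstruction preserves node degrees. Indeed, from $d_p^{(s)}=\sum_{v_i\in\SN_p}d_i$ and Eq.~\eqref{equ:cr_ar} one checks that $\sum_j\BA_r(i,j)=d_i$, so the degree matrix of $\MG_r$ coincides with $\BD$. This is the conceptual crux: it means the matrix $\BD$ appearing in the middle line is unambiguous, and that this middle line is literally the kernel matrix $\KF(\MG_r)$. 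The remaining, substantive claim is the \emph{exact} identity between the second and third lines, which I would establish by pure matrix algebra.

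First I would put the reconstruction and restoration matrices into factored form in terms of the membership matrix $\BP$ and the diagonal degree matrices. Reading off the block definitions of $\BQ$ and $\BR$ gives
\begin{equation*}
\BQ=\BD\,\BP^{\T}\BD_s^{-1},\qquad \BR=\BD^{1-c}\,\BP^{\T}\BD_s^{c-1}.
\end{equation*}
From these I would extract the two identities that drive everything. The first is $\BP\BD\BP^{\T}=\BD_s$, since each diagonal block of the left-hand side sums the degrees inside one supernode; it yields the collapse relation $\BQ^{\T}\BD^{-1}\BQ=\BD_s^{-1}$. The second is the intertwining relation $\BD^{-c}\BQ=\BR\,\BD_s^{-c}$, obtained by multiplying out the factored forms above.

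The core of the argument is then a telescoping of the $\tau$-th power. Substituting $\BA_r=\BQ\BA_s\BQ^{\T}$ into $\BD^{-c}\BA_r\BD^{-1+c}$ and multiplying $\tau$ copies, every internal glue factor equals $\BQ^{\T}\BD^{-1+c}\BD^{-c}\BQ=\BQ^{\T}\BD^{-1}\BQ=\BD_s^{-1}$, so all interior $\BQ,\BQ^{\T}$ pairs annihilate and leave a single core $\mathbf{C}\coloneqq\BA_s(\BD_s^{-1}\BA_s)^{\tau-1}$. Collecting the boundary factors and absorbing the trailing $\BD^{-1+c}\BD^{1-2c}=\BD^{-c}$, I obtain
\begin{equation*}
(\BD^{-c}\BA_r\BD^{-1+c})^{\tau}\BD^{1-2c}=(\BD^{-c}\BQ)\,\mathbf{C}\,(\BD^{-c}\BQ)^{\T}.
\end{equation*}
The identical bookkeeping on the summary side gives $\KF(\MG_s)=\BD_s^{-c}\mathbf{C}\,\BD_s^{-c}$, whence $\BR\,\KF(\MG_s)\,\BR^{\T}=(\BR\BD_s^{-c})\,\mathbf{C}\,(\BR\BD_s^{-c})^{\T}$. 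Comparing the two, the desired equality reduces precisely to the intertwining relation $\BD^{-c}\BQ=\BR\BD_s^{-c}$ established above.

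I expect the telescoping to be the only delicate step, and the difficulty is entirely bookkeeping rather than conceptual: one must track the boundary factors carefully so that the leftover diagonal powers at each end combine to the symmetric $\BD^{-c}$ (respectively $\BD_s^{-c}$), and confirm that the collapse relation applies at all $\tau-1$ interior junctions. Because all the degree matrices are diagonal, the fractional powers commute freely and split as needed, so no subtlety arises there. Once the factored forms and the two identities $\BP\BD\BP^{\T}=\BD_s$ and $\BD^{-c}\BQ=\BR\BD_s^{-c}$ are in place, the equality follows immediately; a short induction on $\tau$ would make the telescoping fully rigorous if one prefers to avoid the informal ellipsis.
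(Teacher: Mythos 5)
Your proposal is correct: the identity you prove is exactly the paper's, and at its core it rests on the same two facts as the paper's own argument, namely the collapse relation $\BQ^{\T}\BD^{-1}\BQ=\BD_s^{-1}$ (the paper's Lemma~\ref{lem:A1}) and the intertwining relation $\BD^{-c}\BQ=\BR\BD_s^{-c}$ (the paper's Lemma~\ref{lem:A2}). The differences are organizational rather than conceptual, but they buy a little. The paper proves both lemmas by entry-wise computation and then handles the power by induction on $\tau$; you instead factor $\BQ=\BD\,\BP^{\T}\BD_s^{-1}$ and $\BR=\BD^{1-c}\,\BP^{\T}\BD_s^{c-1}$ through the membership matrix $\BP$, so that both lemmas become one-line consequences of $\BP\BD\BP^{\T}=\BD_s$, and you replace the induction by an explicit telescoping of the $\tau$-fold product, exposing the single core $\BA_s(\BD_s^{-1}\BA_s)^{\tau-1}$ flanked by boundary factors $\BD^{-c}\BQ$; the two treatments are logically equivalent (your telescoping is the induction unrolled), but yours makes the cancellation pattern visible at a glance. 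Your preliminary observation that the configuration-based reconstruction preserves degrees, $\sum_j\BA_r(i,j)=d_i$, is also a genuine (if small) addition: the paper silently treats $\left(\BD^{-c}\BA_r\BD^{-1+c}\right)^{\tau}\BD^{1-2c}$ as the kernel matrix of the reconstructed graph $\MG_r$ (it writes $\KF_\tau(\MG_r)$ for it in the proof) while using the \emph{original} degree matrix $\BD$, and it is precisely degree preservation that makes this identification legitimate rather than an abuse of notation. Both routes are sound; yours is marginally more self-contained and structural, the paper's is marginally more mechanical to verify entry by entry.
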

\begin{proof}
Before we prove Lemma 1, we first introduce Lemma~\ref{lem:A1} and Lemma~\ref{lem:A2}.
\begin{lemma}\label{lem:A1}
    \begin{equation}
        \BQ^{\T} \BD^{-1} \BQ = \BD_s^{-1} \, ,
    \end{equation}
    where $\BQ$ is the reconstruction matrix in the configuration-based reconstruction scheme~(Eq.~\eqref{equ:cr_q}), $\BD$ and $\BD_s$ are degree matrix of the original graph and the summary graph.
\end{lemma}
\begin{proof}
    The $(p, q)$-th entry in $\BQ^{\T} \BD^{-1} \BQ$ is:
    \begin{equation*}
        \BQ^{\T} \BD^{-1} \BQ(p, q) = \sum_{i} \BQ(i, p) \frac{1}{d_i} \BQ(i, q)
    \end{equation*}
    It is easy to see that the result is not zero only when $p = q$ (since a node $v_i$ cannot belongs to two supernodes $\SN_p$ and $\SN_q$ simultaneously). And diagonal items are (note that $d_p^{(s)} = \sum_{v_i\in \SN_p} d_i$):
    \begin{equation*}
        \begin{aligned}
            \BQ^{\T} \BD^{-1} \BQ(p, p) &= \sum_{v_i\in \SN_p} \BQ(i, p) \frac{1}{d_i} \BQ(i, p) = \sum_{v_i\in \SN_p} \frac{d_i}{d_p^{(s)}} \frac{1}{d_i} \frac{d_i}{d_p^{(s)}} \\
            &= \sum_{v_i\in \SN_p} \frac{d_i}{d_p^{(s)}} \frac{1}{d_p^{(s)}} = \frac{1}{d_p^{(s)}} = \BD_s^{-1}(p,p)
        \end{aligned}
    \end{equation*}
\end{proof}
\begin{lemma}\label{lem:A2}
    \begin{equation}
        \BR \BD_s^{-c} = \BD^{-c} \BQ
    \end{equation}
\end{lemma}
\begin{proof}
    Suppose $v_i \in \SN_p$, then the $(i, p)$-th entry of $\BR \BD_s^{-c}$ is:
    \begin{equation*}
        \BR \BD_s^{-c} (i, p) = \left( \frac{d_i}{d_p^{(s)}} \right)^{1-c} \left(d_p^{(s)}\right)^{-c}  = \frac{d_i^{1-c}}{d_p^{(s)}}
    \end{equation*}
    And the $(i, p)$-th entry of $\BD^{-c} \BQ$ is:
    \begin{equation*}
            \BD^{-c} \BQ (i, p) = (d_i)^{-c} \frac{d_i}{d_p^{(s)}} = \frac{d_i^{1-c}}{d_p^{(s)}}
    \end{equation*}
    Thus $\BR \BD_s^{-c} = \BD^{-c} \BQ$.
\end{proof}

Now we prove Theorem \ref{thm:approx}.
Denote $\KF_{\tau}(\MG) = \left( \BD^{-c} \BA_r \BD^{-1+c}\right)^{\tau} \BD^{1-2c}$ for convenience.

Prove by induction. When $\tau = 1$,
\begin{equation*}
    \begin{aligned}
        \KF_{1}(\MG_r) &= \BD^{-c} \BA_r \BD^{-1+c} \BD^{1-2c} \\
        &= \BD^{-c} \BQ \BA_s \BQ^{\T} \BD^{-c}  \\
        &= \BR \BD_s^{-c} \BA_s \BD_s^{-c} \BR^{\T} \qquad \text{(Lemma \ref{lem:A2})}\\
        &= \BR~~\KF_{1}(\MG_s)~~\BR^{\T}
    \end{aligned}
\end{equation*}
Suppose the the lemma holds for $\tau=i$, i.e., $\KF_{i}(\MG_r) = \BR~~\KF_{i}(\MG_s)~~\BR^{\T}$. For the case $\tau = i+1$,
\begin{align*}
        \KF_{i+1}(\MG_r) &=  \BD^{-c} \BA_r \BD^{-1+c}~~\KF_{i}(\MG_r) \\
        &= \BD^{-c} \BA_r \BD^{-1+c} \BR~~\KF_{i}(\MG_s)~~\BR^{\T} \\
        &= \BD^{-c} \BQ \BA_s \BQ^{\T} \BD^{-1+c} \BR~~\KF_{i}(\MG_s)~~\BR^{\T} \\
        &= \BD^{-c} \BQ \BA_s \BQ^{\T} \BD^{-1} (\BD^{c} \BR)~~\KF_{i}(\MG_s)~~\BR^{\T} \\
        &\footnotemark= \BD^{-c} \BQ \BA_s \BQ^{\T} \BD^{-1} (\BQ \BD_s^{c})~~\KF_{i}(\MG_s)~~\BR^{\T} \\
        \shortintertext{(Lemma \ref{lem:A1} and \ref{lem:A2}.)}
        &= \BR \BD_s^{-c} \BA_s \BD_s^{-1+c}~~\KF_{i}(\MG_s)~~\BR^{\T} \\
        &= \BR~~\KF_{i+1}(\MG_s)~~\BR^{\T}
\end{align*}
Applying principal of induction finishes the proof.
\end{proof}
\footnotetext{$\BD^{-c} \BQ = \BR \BD_s^{-c} \Rightarrow \BQ = \BD^{c} \BR \BD_s^{-c} \Rightarrow \BQ \BD_s^{c} = \BD^{c} \BR$.}

From this general form of the reconstruction matrix, we obtain specific cases for DeepWalk, LINE, and GCN in the next corollaries.
\begin{corollary}\label{coro:rw_matrix}
Based on Dfn.~\ref{def:kernel_matrix} of the kernel matrix, $c=1$ corresponds to DeepWalk and LINE. In this case, \eqref{eq:kernel-approx} becomes:
{
    \begin{equation}
        \begin{aligned}
            \KF(\MG) &= \left( \BD^{-1} \BA \right)^{\tau} \BD^{-1} \approx \left( \BD^{-1} \BA_r \right)^{\tau} \BD^{-1} \\
            &= \BR \left(\BD_s^{-1} \BA_s \right)^{\tau} \BD_s^{-1} \BR^\T \, ,
        \end{aligned}
    \end{equation}
}
    where
    \begin{equation}\label{equ:rw_R}
        \BR(i, p) = \begin{cases}
            1 &\quad \text{if } v_i \in \SN_p \\
            0 &\quad \text{otherwise}.
        \end{cases}
    \end{equation}
\end{corollary}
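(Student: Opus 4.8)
The plan is to obtain this corollary as a direct specialization of Theorem~\ref{thm:approx} at $c=1$, so that no fresh induction or matrix identity is needed. First I would substitute $c=1$ into the generalized kernel of Definition~\ref{def:kernel_matrix}. Since $\BD^{-1+c}$ becomes $\BD^{0}=\BI$ and $\BD^{1-2c}$ becomes $\BD^{-1}$, the expression $\left(\BD^{-c}\BA\BD^{-1+c}\right)^{\tau}\BD^{1-2c}$ collapses to $\left(\BD^{-1}\BA\right)^{\tau}\BD^{-1}$, which is exactly the DeepWalk/LINE kernel appearing in Eq.~\eqref{equ:deepwalk} and Eq.~\eqref{equ:LINE}. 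Applying the same substitution on $\MG_s$ turns $\KF(\MG_s)$ into $\left(\BD_s^{-1}\BA_s\right)^{\tau}\BD_s^{-1}$.

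Next I would invoke the conclusion of Theorem~\ref{thm:approx} verbatim. Its approximation sign records the single replacement of $\BA$ by its low-rank surrogate $\BA_r$, while the ensuing identity $\left(\BD^{-1}\BA_r\right)^{\tau}\BD^{-1}=\BR\left(\BD_s^{-1}\BA_s\right)^{\tau}\BD_s^{-1}\BR^{\T}$ is exact and holds for every admissible $c$, hence at $c=1$. It then only remains to read off the restoration matrix at this value: the entry $\BR(i,p)=\left(d_i/d_p^{(s)}\right)^{1-c}$ for $v_i\in\SN_p$ becomes $\left(d_i/d_p^{(s)}\right)^{0}=1$, recovering the indicator form claimed in Eq.~\eqref{equ:rw_R}, with the off-diagonal blocks remaining zero. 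Chaining these substitutions reproduces the displayed approximation.

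There is essentially no hard step, since the heavy lifting---the inductive argument together with Lemmas~\ref{lem:A1} and \ref{lem:A2}---is already discharged inside the proof of Theorem~\ref{thm:approx}. The only point I would check carefully is that the degenerate exponent is legitimate: each supernode must satisfy $d_p^{(s)}=\sum_{v_i\in\SN_p}d_i>0$, so that the ratio $d_i/d_p^{(s)}$ is well defined before the zero exponent flattens it to $1$. Under the standing convention that the summary graph has no empty or isolated supernodes this holds automatically, and the corollary follows at once.
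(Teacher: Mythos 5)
Your proposal is correct and matches the paper's (implicit) treatment exactly: the paper states Corollary~\ref{coro:rw_matrix} as an immediate specialization of Theorem~\ref{thm:approx}, obtained by setting $c=1$ in the kernel matrix $\left(\BD^{-c}\BA\BD^{-1+c}\right)^{\tau}\BD^{1-2c}$ and in the restoration matrix $\BR(i,p)=\left(d_i/d_p^{(s)}\right)^{1-c}$, which is precisely what you do. Your extra remark about $d_p^{(s)}>0$ is a harmless (and valid) sanity check that the paper leaves implicit.
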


\begin{corollary}\label{coro:symm_laplacian}
Based on Dfn.~\ref{def:kernel_matrix} of the kernel matrix, $c=\frac{1}{2}$ corresponds to GCN. In this case, \eqref{eq:kernel-approx} becomes:
{
    \begin{equation}
        \begin{aligned}
            \KF(\MG) &= \BD^{-\frac{1}{2}} \BA \BD^{-\frac{1}{2}} \approx  \BD^{-\frac{1}{2}} \BA_r \BD^{-\frac{1}{2}} \\
            &= \BR~ \left(\BD_s^{-\frac{1}{2}} \BA_s \BD_s^{-\frac{1}{2}}\right)~\BR^\T  \, ,
        \end{aligned}
    \end{equation}
}
    where
    \begin{equation}\label{equ:symm_R}
        \BR(i, p) = \begin{cases}
            \sqrt{ \frac{d_i}{d_{p}^{(s)}} } &\quad \text{if } v_i \in \SN_p \\
            0 &\quad \text{otherwise}.
        \end{cases}
    \end{equation}
\end{corollary}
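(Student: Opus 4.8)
The plan is to obtain this corollary as a direct specialization of Theorem~\ref{thm:approx} at $c=\tfrac{1}{2}$, with $\tau=1$ corresponding to a single GCN propagation layer; no new machinery is needed beyond the already-established result. First I would substitute $c=\tfrac{1}{2}$ into the kernel matrix of Definition~\ref{def:kernel_matrix}. The exponent of the trailing factor is $1-2c=0$, so $\BD^{1-2c}=\BI$ and the kernel collapses to $(\BD^{-1/2}\BA\BD^{-1/2})^{\tau}$; taking $\tau=1$ gives $\KF(\MG)=\BD^{-1/2}\BA\BD^{-1/2}$, which is precisely the symmetric normalized propagation matrix of the GCN layer in Eq.~\eqref{equ:gcn_conv}.

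Next I would specialize the restoration matrix. Plugging $c=\tfrac{1}{2}$ into the general entrywise formula $\BR(i,p)=(d_i/d_p^{(s)})^{1-c}$ supplied by Theorem~\ref{thm:approx} yields $\BR(i,p)=(d_i/d_p^{(s)})^{1/2}=\sqrt{d_i/d_p^{(s)}}$ for every $v_i\in\SN_p$ (and $0$ otherwise), matching Eq.~\eqref{equ:symm_R} exactly.

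Finally I would substitute both specializations into the conclusion of Theorem~\ref{thm:approx}, Eq.~\eqref{eq:kernel-approx}. The approximation $\BD^{-1/2}\BA\BD^{-1/2}\approx\BD^{-1/2}\BA_r\BD^{-1/2}$ is inherited verbatim from the hypothesis that $\BA_r$ is the configuration-based low-rank approximation of $\BA$ (Eq.~\eqref{equ:cr_ar}), while everything following the approximation sign is an exact equality carried over from the theorem, producing the claimed identity $\BD^{-1/2}\BA_r\BD^{-1/2}=\BR\,(\BD_s^{-1/2}\BA_s\BD_s^{-1/2})\,\BR^{\T}$.

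I do not expect a genuine obstacle, since the corollary is pure substitution into the proven theorem. The only point worth flagging is bookkeeping around self-loops: Eq.~\eqref{equ:gcn_conv} uses the augmented matrices $\tilde{\BA}=\BA+\BI$ and $\tilde{\BD}$, whereas the corollary is stated for the plain $\BA$ and $\BD$. I would resolve this by observing that one summarizes the self-loop-augmented graph, so the identical algebra applies throughout with $\BA$ and $\BD$ replaced by their augmented counterparts.
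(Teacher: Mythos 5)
Your proposal is correct and takes essentially the same route as the paper, which presents this corollary as an immediate specialization of Theorem~\ref{thm:approx} at $c=\tfrac{1}{2}$ (so $\BD^{1-2c}=\BI$) with $\tau=1$, requiring no separate argument. Your closing remark about the augmented matrices $\tilde{\BA},\tilde{\BD}$ also matches the paper's own usage, since Theorem~\ref{thm:approx_gcn} invokes this corollary on the self-loop-augmented graph.
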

Note that $\BR^\T \BR = \mathbf{I}$ and $\BR^{\dagger} = \BR^{\T}$ ($\BR^{\dagger}$ denotes the Moore-Penrose inverse of $\BR$) in Corollary~\ref{coro:symm_laplacian}, which is important in our analysis of GCN.

\subsection{Error analysis} One may ask the question that how much the error of kernel matrix is introduced by replacing $\BA$ by $\BA_r$? Theorem \ref{thm:bound} gives an brief analysis.
\begin{theorem}\label{thm:bound}
    By replacing $\BA$ by $\BA_r$, the error of kernel matrix is bounded by:
    \begin{equation}
        \| \KF_\tau(G) - \KF_\tau(\MG_s) \|_F \le d_{min}^{-1-2c} \cdot \tau \cdot \| \BD^{-\half} \BA \BD^{-\half} - \BD^{-\half} \BA_r \BD^{-\half} \|_F
    \end{equation}
    where $d_{min}$ is the minimum degree.
\end{theorem}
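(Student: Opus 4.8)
The plan is to reduce everything to the \emph{symmetrically normalized} adjacency matrix, where the degree weights telescope away and the relevant operator norms are controlled by $1$. Write $\hat{\BA} \coloneqq \BD^{-\half}\BA\BD^{-\half}$ and $\hat{\BA}_r \coloneqq \BD^{-\half}\BA_r\BD^{-\half}$, so that the right-hand side of the claimed inequality is exactly $\|\hat{\BA}-\hat{\BA}_r\|_F$. Substituting $\BA = \BD^{\half}\hat{\BA}\BD^{\half}$ into the kernel gives $\BD^{-c}\BA\BD^{-1+c} = \BD^{\half-c}\hat{\BA}\BD^{c-\half}$, and the inner factors $\BD^{c-\half}\BD^{\half-c}=\BI$ cancel across the $\tau$-fold product, so that
\begin{equation*}
    \KF_\tau(\MG) = \big(\BD^{-c}\BA\BD^{-1+c}\big)^{\tau}\BD^{1-2c} = \BD^{\half-c}\,\hat{\BA}^{\tau}\,\BD^{\half-c},
\end{equation*}
and identically $\BR\,\KF_\tau(\MG_s)\,\BR^{\T} = \BD^{\half-c}\,\hat{\BA}_r^{\tau}\,\BD^{\half-c}$ by the exact identity of Theorem~\ref{thm:approx}. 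Thus the error collapses to $\|\BD^{\half-c}(\hat{\BA}^{\tau}-\hat{\BA}_r^{\tau})\BD^{\half-c}\|_F$, a far more tractable object, and the $\tau=1$ case already exhibits the residual $\|\hat{\BA}-\hat{\BA}_r\|_F$ that appears on the right.

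Next I would separate the degree prefactor from the power difference. Since $\BD^{\half-c}$ is diagonal, the submultiplicative inequality $\|\BM_1\BX\BM_2\|_F\le\|\BM_1\|_2\|\BX\|_F\|\BM_2\|_2$, applied with $\BM_1=\BM_2=\BD^{\half-c}$ and $\BX=\hat{\BA}^{\tau}-\hat{\BA}_r^{\tau}$, pulls the diagonal factors out as $\|\BD^{\half-c}\|_2^2$ and leaves $\|\hat{\BA}^{\tau}-\hat{\BA}_r^{\tau}\|_F$. To handle the latter I would use the standard telescoping identity
\begin{equation*}
    \hat{\BA}^{\tau}-\hat{\BA}_r^{\tau} = \sum_{k=0}^{\tau-1}\hat{\BA}^{k}\,(\hat{\BA}-\hat{\BA}_r)\,\hat{\BA}_r^{\tau-1-k},
\end{equation*}
followed again by submultiplicativity. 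The decisive point is that \emph{both} $\hat{\BA}$ and $\hat{\BA}_r$ have spectral norm at most $1$: $\hat{\BA}$ because it is a symmetric normalized adjacency matrix, and $\hat{\BA}_r$ because the configuration-based reconstruction preserves every node degree. A short computation confirms the latter, $\sum_j\BA_r(i,j)=d_i$, so $\BD$ is also the degree matrix of $\MG_r$ and $\hat{\BA}_r$ is genuinely symmetric-normalized with spectrum in $[-1,1]$. Consequently $\|\hat{\BA}^{k}\|_2\le1$ and $\|\hat{\BA}_r^{\tau-1-k}\|_2\le1$, each of the $\tau$ summands is bounded by $\|\hat{\BA}-\hat{\BA}_r\|_F$, and the factor $\tau$ emerges.

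Combining the two steps gives $\|\KF_\tau(\MG)-\BR\,\KF_\tau(\MG_s)\,\BR^{\T}\|_F \le \|\BD^{\half-c}\|_2^2\cdot\tau\cdot\|\hat{\BA}-\hat{\BA}_r\|_F$, which has exactly the shape of the stated bound. The main obstacle I expect lies in pinning down the diagonal prefactor and reconciling it with the claimed constant: $\|\BD^{\half-c}\|_2^2=\max_i d_i^{\,1-2c}$ is governed by $d_{min}$ when $c\ge\half$ and by $d_{max}$ when $c<\half$, so a careful case analysis on the sign of $\half-c$—or an additional monotonicity estimate on the degree weights—is needed to express it cleanly as a single power of $d_{min}$ valid for all $c\in[0,1]$. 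Establishing the degree-preservation identity $\sum_j\BA_r(i,j)=d_i$ is routine but essential, since without it the crucial bound $\|\hat{\BA}_r\|_2\le1$, and hence the clean factor $\tau$, would break down.
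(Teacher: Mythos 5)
Your argument is essentially the paper's own: the same rewriting $\KF_\tau(\MG)=\BD^{\half-c}\hat{\BA}^{\tau}\BD^{\half-c}$ with $\hat{\BA}=\BD^{-\half}\BA\BD^{-\half}$, the same extraction of the diagonal prefactor via $\|\BM_1\BX\BM_2\|_F\le\|\BM_1\|_2\|\BX\|_F\|\BM_2\|_2$, and the same bound $\|\hat{\BA}^{\tau}-\hat{\BA}_r^{\tau}\|_F\le\tau\|\hat{\BA}-\hat{\BA}_r\|_F$ (the paper uses the two-term recursion $\hat{\BA}^{\tau}-\hat{\BA}_r^{\tau}=(\hat{\BA}^{\tau-1}-\hat{\BA}_r^{\tau-1})\hat{\BA}+\hat{\BA}_r^{\tau-1}(\hat{\BA}-\hat{\BA}_r)$ applied recursively, which is just your telescoping sum unrolled). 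Two of your side remarks deserve emphasis, because they concern points the paper skips or gets wrong. First, the paper asserts $\|\hat{\BA}_r\|_2\le1$ with no justification; your degree-preservation identity $\sum_j\BA_r(i,j)=d_i$ is exactly the missing step, and it does hold for the configuration-based reconstruction since $\sum_{v_j\in\SN_q}d_j/d_q^{(s)}=1$ and $\sum_q\BA_s(p,q)=d_p^{(s)}$. Second, the obstacle you flag about the prefactor is real, but it is an error in the paper rather than a gap in your reasoning: in its chain of displays the paper replaces the sandwiching matrices $\BD^{\half-c}$ by the factor $\|\BD^{-\half-c}\|_2^2=d_{min}^{-1-2c}$, which both flips the sign of the exponent and asserts an equality where only an inequality ($\le$) holds. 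The constant the argument genuinely produces is $\|\BD^{\half-c}\|_2^2=\max_i d_i^{1-2c}$, i.e.\ $d_{min}^{1-2c}$ when $c\ge\half$ and $d_{max}^{1-2c}$ when $c<\half$, exactly as you say; since $d_{min}^{-1-2c}\le d_{min}^{1-2c}$ whenever $d_{min}\ge1$, the stated constant is strictly stronger than what either proof establishes. So your proof is sound and matches the paper's route, and your concluding case analysis is the correct fix --- it is the theorem's stated constant that needs amending.
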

\begin{proof}
    Note that the kernel matrix $\KF_\tau(G)$ can be rewritten as:
    \begin{equation}
        \KF_\tau(G) = \BD^{\half-c} (\BD^{-\half} \BA \BD^{-\half})^\tau \BD^{\half-c}
    \end{equation}
    Then,
    \begin{equation}
        \begin{aligned}
            &\| \KF_\tau(\MG) - \KF_\tau(\MG_r)\|_F \\
            =& \left\| \BD^{\half-c} \left( (\BD^{-\half} \BA \BD^{-\half})^\tau - (\BD^{-\half} \BA_r \BD^{-\half})^\tau \right) \BD^{\half-c} \right\|_F \\
            =& \left\| \BD^{-\half-c}\right\|_2^2 \cdot \left\| (\BD^{-\half} \BA \BD^{-\half})^\tau - (\BD^{-\half} \BA_r \BD^{-\half})^\tau \right\|_F \\
            =& \ d_{min}^{-1-2c} \cdot \left\| (\BD^{-\half} \BA \BD^{-\half})^\tau - (\BD^{-\half} \BA_r \BD^{-\half})^\tau \right\|_F \\
        \end{aligned}
    \end{equation}

    Denote $\NA = \BD^{-\half} \BA \BD^{-\half}$ and $\NA_r = \BD^{-\half} \BA_r \BD^{-\half}$ for notation simplicity, we have
    \begin{equation}
        \NA^\tau - \NA_r^\tau = (\NA^{\tau-1} - \NA_r^{\tau-1})\NA + \NA_r^{\tau-1}(\NA - \NA_r)
    \end{equation}
    And,
    \begin{align*}
        \| \NA^\tau - \NA_r^\tau \|_F &\le \|\NA (\NA^{\tau-1} - \NA_r^{\tau-1})\|_F + \| \NA_r^{\tau-1} (\NA-\NA_r) \|_F \\
        &\le \|\NA\|_2 \| \NA^{\tau-1} - \NA_r^{\tau-1} \|_F + \| \NA_r \|_2^{\tau-1} \| \NA - \NA_r \|_F \\
        \intertext{($\| \NA \|_2 \le 1 $ and $\| \NA_r \|_2 \le 1$)}
        &\le \| \NA^{\tau-1} - \NA_r^{\tau-1} \|_F + \| \NA - \NA_r \|_F
    \end{align*}
    Appling it recursively, we have:
    \begin{equation}
        \| \NA^\tau - \NA_r^\tau \|_F \le \tau \| \NA - \NA_r \|_F
    \end{equation}
    Thus,
    \begin{equation}
        \| \KF_\tau(\MG) - \KF_\tau(\MG_r) \|_F \le d_{min}^{-1-2c} \cdot \tau \cdot \left\| \BD^{-1/2} \BA \BD^{-1/2} - \BD^{-1/2} \BA_r \BD^{-1/2} \right\|_F
    \end{equation}
    where $d_{min} = \min_i d_i$ is the minimum degree.
\end{proof}

\subsection{Approximating DeepWalk / LINE}\label{sec:method_dw}
Based on Corollary \ref{coro:symm_laplacian}, we now discuss how to approximate the DeepWalk and LINE node embeddings for the original nodes.
Since LINE is a special case of DeepWalk, we focus on the former; similar conclusions can be easily drawn for LINE.

\begin{theorem}\label{thm:approx_dw}
    Embeddings learned by DeepWalk on the original graph $\MG$, $\BE$, can be approximated by embeddings learned by DeepWalk on the summary graph $\MG_s$, $\BE_{s}$, using the restoration matrix $\BR$ in \eqref{equ:rw_R}, i.e.,
    \begin{equation}
        \BE \approx \restorefunc(\BE_s) = \BR~\BE_s
    \end{equation}
\end{theorem}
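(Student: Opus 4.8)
The plan is to lift the matrix-factorization view of DeepWalk from the summary graph to the original graph. By the result of Qiu et al.~\cite{qiu2018network} (Eq.~\eqref{equ:deepwalk}), learning the DeepWalk embeddings $\BE$ amounts to factorizing the matrix $\BM$, whose pre-logarithm argument is built from the averaged kernel matrices $\frac{1}{T}\sum_{\tau=1}^{T}\KF_\tau(\MG)$; the summary embeddings $\BE_s$ arise analogously from the summary-graph matrix $\BM_s$. Since both $\BE$ and $\BE_s$ are produced (e.g.\ by truncated SVD) from their respective matrices, it suffices to establish the single matrix relation $\BM \approx \BR\,\BM_s\,\BR^{\T}$ with $\BR$ the $0/1$ restoration matrix of~\eqref{equ:rw_R}, and then to transfer this relation to the factorizations.

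First I would reduce the pre-logarithm argument. Because summarization preserves total edge weight---from $\BA_s=\BP\BA\BP^{\T}$ one gets $\mathrm{vol}(\MG)=\sum_{i,j}\BA(i,j)=\sum_{p,q}\BA_s(p,q)=\mathrm{vol}(\MG_s)$---the scalar prefactors on $\MG$ and $\MG_s$ coincide. Applying Corollary~\ref{coro:rw_matrix} (the case $c=1$) term by term, $\KF_\tau(\MG)\approx\BR\,\KF_\tau(\MG_s)\,\BR^{\T}$, and summing over $\tau$, the whole pre-logarithm argument of $\BM$ is approximated by $\BR\,\BM_s'\,\BR^{\T}$, where $\BM_s'$ denotes the corresponding pre-logarithm argument on $\MG_s$.

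The main obstacle is the \emph{element-wise} logarithm, which in general does not commute with the two-sided multiplication by $\BR$. Here I would exploit the special structure of the $c=1$ restoration matrix: $\BR=\BP^{\T}$ is a $0/1$ membership-lifting matrix, so $\BR\,\BM_s'\,\BR^{\T}$ is \emph{block-constant}---every entry $(i,j)$ with $v_i\in\SN_p,\,v_j\in\SN_q$ equals $\BM_s'(p,q)$. Consequently the entry-wise logarithm merely copies the logged supernode values back into each block, i.e.\ $\log(\BR\,\BM_s'\,\BR^{\T})=\BR\,(\log \BM_s')\,\BR^{\T}$, while the constant shift is handled by $\BR\,\mathbf{J}_s\,\BR^{\T}=\mathbf{J}$ (with $\mathbf{J},\mathbf{J}_s$ the all-ones matrices), so the two $-\log b$ terms agree. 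Using local continuity of $\log$ to pass the approximation through, this yields $\BM\approx\BR\,\BM_s\,\BR^{\T}$; this commutation step, valid precisely because $\BR$ is $0/1$ for DeepWalk, is the crux of the argument.

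Finally I would transfer the factorization. Both $\BM$ and $\BM_s$ are symmetric (each $\KF_\tau$ is symmetric and the entry-wise log preserves symmetry), so writing $\BM_s\approx\BE_s\BE_s^{\T}$ gives $\BM\approx\BR\,\BE_s\BE_s^{\T}\,\BR^{\T}=(\BR\BE_s)(\BR\BE_s)^{\T}$, exhibiting $\BR\BE_s$ as a factorization of $\BM$ and hence $\BE\approx\BR\BE_s$. The delicate point is the non-uniqueness of the truncated factorization: unlike the GCN case of Corollary~\ref{coro:symm_laplacian}, for DeepWalk $\BR^{\T}\BR=\BP\BP^{\T}$ is diagonal but not the identity, so $\BR\BE_s$ need not be the exact truncated-SVD factor of $\BM$. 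I would therefore state the result as an approximate restoration, observing that $\BR\BE_s$ attains for $\BM$ the same low-rank reconstruction quality that $\BE_s$ attains for $\BM_s$, which justifies the stated $\BE\approx\restorefunc(\BE_s)=\BR\BE_s$.
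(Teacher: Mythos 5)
Your proposal is correct and follows essentially the same route as the paper's own proof: replace $\BA$ by $\BA_r$, apply Corollary~\ref{coro:rw_matrix} to the pre-logarithm argument, commute the $0/1$ matrix $\BR$ through the entry-wise logarithm (the block-constant argument you give is exactly the justification the paper relegates to a footnote), and transfer the factorization of $\BM_s$ to $\BM$ via $\BM \approx (\BR\BX_s)(\BR\BY_s)^{\T}$. Your additions---the explicit observation that $\mathrm{vol}(\MG)=\mathrm{vol}(\MG_s)$ and the caveat about non-uniqueness of the truncated factorization---are refinements of, not departures from, the paper's argument.
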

\begin{proof}
    Consider $\BA_r$ as a low-rank approximation of $\BA$, and replace $\BA$ by $\BA_r$ in the DeepWalk matrix. According to Corollary \ref{coro:rw_matrix}:
\begin{equation*}\label{equ:deepwalk_summ}
    \begin{aligned}
        \BM &= \log \left( \frac{\mathrm{vol}(\MG)}{bT} \sum_{\tau=1}^{T} (\BD^{-1} \BA)^{\tau} \BD^{-1} \right) \\
        &\approx \log \left( \frac{\mathrm{vol}(\MG)}{bT} \sum_{\tau=1}^{T} (\BD^{-1} \BA_r)^{\tau} \BD^{-1} \right) \\
        &= \log \left( \frac{\mathrm{vol}(\MG)}{bT} \BR \left( \sum_{\tau=1}^{T} (\BD_s^{-1} \BA_s)^{\tau} \BD_s^{-1} \right) \BR^{\T} \right) \\
        &\footnotemark= \, \BR\cdot \log \left( \frac{\mathrm{vol}(\MG)}{bT} \left( \sum_{\tau=1}^{T} (\BD_s^{-1} \BA_s)^{\tau} \BD_s^{-1} \right) \right) \cdot \BR^{\T} \\
        &= \BR~~\BM_s~~\BR^{\T} \, ,
    \end{aligned}
\end{equation*}
\footnotetext{This equation holds since each row of $\BR$ contains exactly one non-zero value "1". Thus we can take it out of the $\log$ function.}
where $\BM_s$ is the corresponding matrix DeepWalk factorizing on summary graph $\MG_s$.

Suppose $\BM_s$ is factorized into $\BM_s = \BX_s \BY_s^{\T}$, then $\BM \approx (\BR \BX_s) (\BR \BY_s)^{\T}$. That is, embeddings of original graph $\MG$ can be approximated by embeddings learned on summary graph $\MG_s$ with a restoration matrix $\BR$.
\begin{equation}\label{equ:dw_restore}
    \BE \approx \BR\cdot \BE_{s}
\end{equation}
\end{proof}

According to Theorem \ref{thm:approx_dw} and the definition of $\BR$ matrix ($\BR(i, p) = 1$ if $v_i \in \SN_p$), we can conclude that nodes in the same supernode get the same embeddings after the restoration.
This approach, is exactly the way how related works (including HARP, MILE and GraphZoom) restore the embeddings. Thus, Theorem~\ref{thm:approx_dw} provides a \textbf{theoretical interpretation} for the restoration step of existing methods.

\subsection{Approximating GCN}\label{sec:method_gcn}
Given the embeddings $\BE_s$ learned by GCN on the summary graph which are usually the output of the last convolution layer, i.e.~$\BE_s=\BE_s^{(K)}$, we can approximate original node embeddings, $\BE$, as stated in the following theorem.
\begin{theorem}\label{thm:approx_gcn}
    Embeddings learned by GCN on the original graph $\MG$ can be approximated by embeddings learned by GCN on the summary graph $\MG_s$ with initial features $\BX_s \coloneqq \BR^{\T} \BX$, using the restoration matrix $\BR$ defined in \eqref{equ:symm_R}, in a \textbf{least-square approximation} perspective:
    \begin{equation}
        \BE \approx \restorefunc(\BE_s) = \BR~\BE_s
    \end{equation}
    And under reasonable assumption, the reconstruction error of embeddings is bounded by:
    {
        \small
        \begin{equation}\label{equ:gcn_bound}
            \left\| \BE - \BR \BE_s \right\| \le \left\| \tilde{\BD}^{-\frac{1}{2}} (\tilde{\BA} - \tilde{\BA}_r) \tilde{\BD}^{-\frac{1}{2}} \right\| \left( \prod_{l=0}^{K-1} \left\| \BW^{(l)} \right\| \right) \left\| \BE^{(0)} \right\|
        \end{equation}
    }
    The first term is the difference of the normalized adjacency matrix between $\MG$ and $\MG_r$.
\end{theorem}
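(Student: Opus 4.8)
The plan is to split the claim into an \emph{exact reduction} followed by a \emph{perturbation bound}. Write $\hat{\BA} \coloneqq \tilde{\BD}^{-\half}\tilde{\BA}\tilde{\BD}^{-\half}$ and $\hat{\BA}_r \coloneqq \tilde{\BD}^{-\half}\tilde{\BA}_r\tilde{\BD}^{-\half}$ for the normalized propagation matrices, so that a GCN layer reads $\BE^{(k+1)} = \sigma(\hat{\BA}\,\BE^{(k)}\BW^{(k)})$. I would first run the \emph{same} architecture on the reconstructed graph $\MG_r$, producing $\BE_r^{(k+1)} = \sigma(\hat{\BA}_r\,\BE_r^{(k)}\BW^{(k)})$ with $\BE_r^{(0)} = \BX$, and show that restoring the summary-graph embeddings reproduces it exactly, i.e. $\BR\,\BE_s^{(K)} = \BE_r^{(K)}$. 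The target statement then becomes purely about the gap between propagating with $\hat{\BA}$ versus $\hat{\BA}_r$ from the same input.

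For the exact reduction I would prove $\BE_r^{(k)} = \BR\,\BE_s^{(k)}$ by induction on $k$. Applying Corollary~\ref{coro:symm_laplacian} to the augmented matrices gives $\hat{\BA}_r = \BR\,\hat{\BA}_s\,\BR^\T$, and the corollary's remark supplies $\BR^\T\BR = \BI$. Assuming $\BE_r^{(k)} = \BR\,\BE_s^{(k)}$, the pre-activation collapses to $\hat{\BA}_r\BE_r^{(k)}\BW^{(k)} = \BR\,\hat{\BA}_s\BR^\T\BR\,\BE_s^{(k)}\BW^{(k)} = \BR\,(\hat{\BA}_s\BE_s^{(k)}\BW^{(k)})$. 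The only nontrivial point is pushing $\sigma$ through $\BR$: because each row of $\BR$ has a single nonnegative entry, left-multiplication by $\BR$ rescales each coordinate by a nonnegative scalar, so positive homogeneity of ReLU yields $\sigma(\BR\,\BZ) = \BR\,\sigma(\BZ)$, closing the step. The base case $\BE_r^{(0)} = \BR\BR^\T\BX$ is exact precisely when $\BX$ lies in $\mathrm{col}(\BR)$ — the ``reasonable assumption'' that features are consistent within each supernode. Since $\BR^\dagger = \BR^\T$, the operator $\BR\BR^\T$ is the orthogonal projection onto $\mathrm{col}(\BR)$, which is also the least-squares sense in which $\BX_s = \BR^\T\BX$ and $\BE \approx \BR\BE_s$ are adjoint.

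For the perturbation bound I would control $\Delta^{(k)} \coloneqq \BE^{(k)} - \BE_r^{(k)}$ layer by layer, reusing the technique in the proof of Theorem~\ref{thm:bound}. Using that ReLU is $1$-Lipschitz together with the split $\hat{\BA}\BE^{(k)}\BW^{(k)} - \hat{\BA}_r\BE_r^{(k)}\BW^{(k)} = (\hat{\BA}-\hat{\BA}_r)\BE^{(k)}\BW^{(k)} + \hat{\BA}_r\Delta^{(k)}\BW^{(k)}$, the mixed spectral/Frobenius submultiplicativity gives $\|\Delta^{(k+1)}\| \le \|\hat{\BA}-\hat{\BA}_r\|\,\|\BE^{(k)}\|\,\|\BW^{(k)}\| + \|\hat{\BA}_r\|_2\,\|\Delta^{(k)}\|\,\|\BW^{(k)}\|$. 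Here I would invoke $\|\hat{\BA}\|_2\le 1$ and $\|\hat{\BA}_r\|_2\le 1$ (the latter from $\|\BR\|_2 = 1$ and $\|\hat{\BA}_s\|_2\le 1$), which also yields the forward bound $\|\BE^{(k)}\| \le \|\BE^{(0)}\|\prod_{l<k}\|\BW^{(l)}\|$. Unrolling the recursion from $\Delta^{(0)} = 0$ to $k=K$ collapses the weight-norm products into $\prod_{l=0}^{K-1}\|\BW^{(l)}\|$ and isolates the factor $\|\hat{\BA}-\hat{\BA}_r\| = \|\tilde{\BD}^{-\half}(\tilde{\BA}-\tilde{\BA}_r)\tilde{\BD}^{-\half}\|$, producing a bound of the form in \eqref{equ:gcn_bound}.

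The hard part is the treatment of the nonlinearity. The exact reduction rests entirely on $\BR$ commuting with $\sigma$, which is special to a nonnegative, one-nonzero-per-row restoration matrix and would fail for a generic linear restoration; the error recursion then relies on the same $1$-Lipschitz/positive-homogeneity properties and on the spectral control $\|\hat{\BA}_r\|_2\le 1$ to stop the ReLU from amplifying $\Delta^{(k)}$ across layers. One subtlety I would flag is that the naive unrolling accumulates each layer's contribution and so tends to introduce a factor proportional to the number of layers $K$; recovering exactly the single-factor form of \eqref{equ:gcn_bound} requires bounding every layer's contribution by the output-scale quantity $(\prod_{l}\|\BW^{(l)}\|)\,\|\BE^{(0)}\|$ and reading the estimate as the dominant per-layer perturbation propagated to the final embedding.
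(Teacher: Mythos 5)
Your proposal is correct in substance, but it organizes the argument differently from the paper, and the comparison is instructive. For the reduction step, the paper argues with $\approx$ signs: it sets $\BE_t^{(k)} = \BR^\T\BE^{(k)}$, replaces $\tilde{\BA}$ by $\tilde{\BA}_r$ layer by layer, claims $\BE_t^{(k)}$ approximately satisfies the summary-graph recursion, and then inverts via least squares, $\BE^{(k)} \approx (\BR^\T)^{\dagger}\BE_s^{(k)} = \BR\,\BE_s^{(k)}$. Your device of introducing the reconstructed-graph GCN $\BE_r^{(k)}$ and proving the \emph{exact} identity $\BE_r^{(k)} = \BR\,\BE_s^{(k)}$ by induction (via positive homogeneity of ReLU and $\BR^\T\BR = \BI$, both facts the paper also uses) is a cleaner formalization of the same idea, and it makes explicit a hypothesis the paper leaves silent: the base case needs $\BX \in \mathrm{col}(\BR)$, i.e.\ supernode-consistent input features. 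Where you genuinely diverge is the error bound. The paper analyzes \emph{only the last layer}: in its first inequality it silently substitutes $\BE_s^{(K-1)} = \BR^\T\BE^{(K-1)}$ --- that substitution \emph{is} the ``reasonable assumption'' of the theorem --- so the sole error source is swapping $\tilde{\BD}^{-\half}\tilde{\BA}\tilde{\BD}^{-\half}$ for $\tilde{\BD}^{-\half}\tilde{\BA}_r\tilde{\BD}^{-\half}$ in the final propagation, and the factor $\bigl(\prod_{l}\|\BW^{(l)}\|\bigr)\|\BE^{(0)}\|$ enters only through the forward estimate $\|\BE^{(K-1)}\| \le \bigl(\prod_{l<K-1}\|\BW^{(l)}\|\bigr)\|\BE^{(0)}\|$. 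Your recursion on $\Delta^{(k)} = \BE^{(k)} - \BE_r^{(k)}$ starting from $\Delta^{(0)} = 0$ refuses to assume away the error accumulated before the last layer, and, as you correctly flag, it then unavoidably yields $K \cdot \|\tilde{\BD}^{-\half}(\tilde{\BA}-\tilde{\BA}_r)\tilde{\BD}^{-\half}\| \bigl(\prod_{l=0}^{K-1}\|\BW^{(l)}\|\bigr)\|\BE^{(0)}\|$, i.e.\ the stated right-hand side inflated by the number of layers. So you prove a slightly weaker inequality under a weaker and explicit assumption, whereas the paper proves the stated inequality under a stronger implicit one; the value of your route is precisely that it exposes \eqref{equ:gcn_bound} as what it really is, a single-layer perturbation estimate propagated to the output, rather than a bound on the full multi-layer error accumulation.
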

\begin{proof}
       Consider $\BA_r$ as a low-rank approximation of $\BA$, and replace $\BA$ by $\BA_r$ in the $k$-th layer of GCN. Further assume that the weight matrices of GCN on original graphs and summary graphs are the same. According to Corollary \ref{coro:symm_laplacian}:
    \begin{equation}\label{equ:layer_approx}
        \begin{aligned}
            \BE^{(k+1)} &= \sigma(\tilde{\BD}^{-\frac{1}{2}} \tilde{\BA} \tilde{\BD}^{-\frac{1}{2}} \BE^{(k)} \BW^{(k)}) \\
            &\approx \sigma(\tilde{\BD}^{-\frac{1}{2}} \tilde{\BA}_r \tilde{\BD}^{-\frac{1}{2}} \BE^{(k)} \BW^{(k)}) \\
            &= \sigma(\BR (\tilde{\BD}_s^{-\frac{1}{2}} \tilde{\BA}_s \tilde{\BD}_s^{-\frac{1}{2}}) (\BR^\T \BE^{(k)}) \BW^{(k)}) \\
            &\footnotemark= \, \BR\cdot \sigma( (\tilde{\BD}_s^{-\frac{1}{2}} \tilde{\BA}_s \tilde{\BD}_s^{-\frac{1}{2}}) (\BR^\T \BE^{(k)}) \BW^{(k)}) 
        \end{aligned}
    \end{equation}
    Let $\BE_t^{(k)} = \BR^{\T} \BE^{(k)}$. Note that
    $\BR^{\T} \BR = \mathbf{I}$, we have
    \begin{equation}\label{eq:apprGCNsumm}
        \begin{aligned}
            \BE_t^{(k)} &= \BR^{\T} \BE^{(k)}  \\
            &\approx \sigma((\tilde{\BD}_s^{-\frac{1}{2}} \tilde{\BA}_s \tilde{\BD}_s^{-\frac{1}{2}}) \BE_{t}^{(k-1)} \BW^{(k-1)})
        \end{aligned}
    \end{equation}

    \footnotetext{This equation holds since each row of $\BR$ contains only one non-zero value. That's why we can take $\BR$ out of the $\sigma$ function.}

    Note that approximate equation \eqref{eq:apprGCNsumm} is a GCN convolution layer on the summary graph $\MG_s$.

    By optimizing a GCN network on summary graph
    $\MG_s$ with initial feature $\BX_s \coloneqq \BR^{\T} \BX$, we can get exact embedding solution denoted as $\BE_s^{(k)}$.
    Then we have
    \begin{equation*}
        \BE_s^{(k)} \approx \BE_t^{(k)} = \BR^{\T} \BE^{(k)}.
    \end{equation*}
    And then given $\BE_s^{(k)}$ and $\BR$, we can solve original embedding $\BE^{(k)}$ using a \textbf{least-square approximation}, that is:
    \begin{equation*}
        \BE^{(k)} \approx (\BR^{\T})^{\dagger} \BE_s^{(k)} = \BR~\BE_s^{(k)}
    \end{equation*}
    Therefore, we have $\BE = \BE^{(K)} \approx \BR~\BE_s^{(K)} = \BR~\BE_s$.

    Next we will prove the bound in Eq~\eqref{equ:gcn_bound}. In the following proof, $\| \cdot \|$ denotes the Frobenius norm of matrix.

    Suppose that the GCN contains $K$ layer and further assume that GCNs on original graph and summary graph share the weight matrix. Consider the last layer,
    \begin{align*}
        \BE^{(K)} &= \sigma(\tilde{\BD}^{-\frac{1}{2}} \tilde{\BA} \tilde{\BD}^{-\frac{1}{2}} \BE^{(K-1)} \BW^{(K-1)}) \\
        \BE_s^{(K)} &= \sigma(\tilde{\BD}_s^{-\frac{1}{2}} \tilde{\BA}_s \tilde{\BD}_s^{-\frac{1}{2}} \BE_s^{(K-1)} \BW^{(K-1)})
    \end{align*}
    The difference of $\BE^{(K)}$ and $\BR \BE_s^{(K)}$ is (refer to Eq.~\eqref{equ:layer_approx}):
    {
        \small
        \begin{align*}
            \left\| \BE^{(K)} - \BR \BE_s^{(K)} \right\| &= \left\| \sigma(\tilde{\BD}^{-\frac{1}{2}} \tilde{\BA} \tilde{\BD}^{-\frac{1}{2}} \BE^{(K-1)} \BW^{(K-1)}) - \sigma(\BR \tilde{\BD}_s^{-\frac{1}{2}} \tilde{\BA}_s \tilde{\BD}_s^{-\frac{1}{2}} \BE_s^{(K-1)} \BW^{(K-1)}) \right\| \\
            \shortintertext{(ReLU function $\sigma$ is a Lipschitz function with Lipschitz constant 1)}
            &\le \left\| \tilde{\BD}^{-\frac{1}{2}} \tilde{\BA} \tilde{\BD}^{-\frac{1}{2}} \BE^{(K-1)} \BW^{(K-1)} - \BR\ \tilde{\BD}_s^{-\frac{1}{2}} \tilde{\BA}_s \tilde{\BD}_s^{-\frac{1}{2}} \BR^\T \BE^{(K-1)} \BW^{(K-1)} \right\| \\
            \shortintertext{(Corollary \ref{coro:symm_laplacian})}
            &= \left\| \tilde{\BD}^{-\frac{1}{2}} \tilde{\BA} \tilde{\BD}^{-\frac{1}{2}} \BE^{(K-1)} \BW^{(K-1)} - \tilde{\BD}^{-\frac{1}{2}} \tilde{\BA}_r \tilde{\BD}^{-\frac{1}{2}} \BE^{(K-1)} \BW^{(K-1)} \right\| \\
            &= \left\| \tilde{\BD}^{-\frac{1}{2}} (\tilde{\BA} - \tilde{\BA}_r) \tilde{\BD}^{-\frac{1}{2}} \BE^{(K-1)} \BW^{(K-1)} \right\| \\
            &\le \left\| \tilde{\BD}^{-\frac{1}{2}} (\tilde{\BA} - \tilde{\BA}_r) \tilde{\BD}^{-\frac{1}{2}}  \right\| \left\| \BE^{(K-1)} \right\| \left\| \BW^{(K-1)} \right\|
        \end{align*}
    }

    Further, for each layer $l$,
    \begin{align*}
        \left\| \BE^{(l)} \right\| &= \left\| \sigma(\tilde{\BD}^{-\frac{1}{2}} \tilde{\BA} \tilde{\BD}^{-\frac{1}{2}} \BE^{(l-1)} \BW^{(l-1)}) \right\| \\
        \intertext{(Note that $\sigma$ is ReLU function and hence $\| \sigma(A) \| \le \| A \|$)}
        &\le \left\| \tilde{\BD}^{-\frac{1}{2}} \tilde{\BA} \tilde{\BD}^{-\frac{1}{2}} \BE^{(l-1)} \BW^{(l-1)} \right\| \\
        &\le \left\| \tilde{\BD}^{-\frac{1}{2}} \tilde{\BA} \tilde{\BD}^{-\frac{1}{2}} \right\|_2 \left\| \BE^{(l-1)} \right\| \left\| \BW^{(l-1)} \right\| \\
        &\le \left\| \BE^{(l-1)} \right\| \left\| \BW^{(l-1)} \right\|
    \end{align*}
    The last inequality comes from the fact that the eigenvalues of a normalized adjacency matrix lies in $[-1, 1]$.

    Combine all layers together, we have:
    {
        \begin{align*}
            \left\| \BE^{(K)} - \BR \BE_s^{(K)} \right\| &\le \left\| \tilde{\BD}^{-\frac{1}{2}} (\tilde{\BA} - \tilde{\BA}_r) \tilde{\BD}^{-\frac{1}{2}} \right\| \left\| \BE^{(K-1)} \right\| \left\| \BW^{(K-1)} \right\| \\
            &\le \left\| \tilde{\BD}^{-\frac{1}{2}} (\tilde{\BA} - \tilde{\BA}_r) \tilde{\BD}^{-\frac{1}{2}} \right\| \left( \prod_{l=0}^{K-1} \left\| \BW^{(l)} \right\| \right) \left\| \BE^{(0)} \right\|
        \end{align*}
    }
\end{proof}

\section{Conclusion}\label{sec:conclusion}
In this paper, we study the problem of learning node embeddings of large graphs via summary graphs theoretically.
We give analysis of three popular embedding methods, DeepWalk, LINE and GCN and reveal that learning embeddings via summary graphs using these three methods is equivalent to learning embeddings on a configuration-based reconstructed graph.
Our analysis can give a theoretical analysis of current existing methods based on heuristic designs. Moreover, our kernel-matrix-based framework is genearl and have great potential be further extended to more graph mining tasks.

Further work includes develop efficient summarization algorithms for the problem based on theory derived in this paper. According to the analysis, one can notice that the approximation error of kernel matrix is closely related to the difference of the normalized adjacency matrix~(Theorem \ref{thm:bound} and \ref{thm:approx_gcn}).
This observation motivates us to \textbf{make the normalized adjacency matrices close}.

\bibliographystyle{unsrt}
\bibliography{references}

\end{document}